\RequirePackage{fix-cm}
\documentclass[notitlepage]{article}

\usepackage{arxiv}

\usepackage{amsmath,amsfonts,bm}

\def\eqref#1{equation~\ref{#1}}

\def\1{\bm{1}}

\DeclareMathAlphabet{\mathsfit}{\encodingdefault}{\sfdefault}{m}{sl}
\SetMathAlphabet{\mathsfit}{bold}{\encodingdefault}{\sfdefault}{bx}{n}

\DeclareMathOperator*{\argmin}{arg\,min}

\usepackage{amsmath,amssymb,amsthm,mathtools}
\usepackage[ruled,vlined,linesnumbered]{algorithm2e}
\SetAlgoNlRelativeSize{0}

\usepackage{booktabs}
\usepackage{placeins}
\usepackage{graphicx}
\usepackage{needspace}
\usepackage{xcolor}
\usepackage{tikz}
\usetikzlibrary{arrows.meta,calc}
\usepackage{microtype}
\usepackage[hidelinks]{hyperref}
\usepackage{bookmark}
\usepackage{url}
\usepackage{subcaption}
\usepackage{tabularx}

\newtheorem{theorem}{Theorem}
\newtheorem{lemma}{Lemma}

\definecolor{figInk}{HTML}{2B2B2B}
\definecolor{figReturn}{HTML}{332288}
\definecolor{figTemporal}{HTML}{CC6677}
\definecolor{figProjected}{HTML}{44AA99}
\definecolor{figGuide}{HTML}{D8DDE3}
\definecolor{figAxis}{HTML}{66666A}

\newcommand{\method}{\textnormal{\textsc{Damper}}}
\newcommand{\methodplain}{DAMPER}

\newcommand{\returnloss}{L_{\mathrm{R}}}
\newcommand{\temporalloss}{L_{\mathrm{T}}}

\title{\methodplain: Return-Prioritized Gradient\\
Control for Smooth Policies}

\author{
    Seokmin Ko \quad Taewon Goo \quad Kihyuk Hong \\
    \normalfont KAIST \\
    \normalfont\texttt{\{komin0407, tetrise9, kihyukh\}@kaist.ac.kr}
}

\date{}

\usepackage[
    backend=biber,
    style=numeric-comp,
    maxbibnames=9,
    maxcitenames=1,
    uniquelist=false,
    uniquename=false,
    isbn=false,
    sorting=nty,
    doi=false
]{biblatex}
\hypersetup{
    pdftitle={DAMPER: Return-Prioritized Gradient Control for Smooth Policies},
    pdfauthor={Seokmin Ko, Taewon Goo, Kihyuk Hong}
}

\begin{document}

\maketitle

\begin{abstract}
Actor--critic methods achieve strong performance in continuous control,
but their policies can produce highly oscillatory actions.
A common remedy is to add auxiliary smoothness losses.
However, their contribution can be negligible when their gradients are small
relative to the native actor gradient. Moreover, existing methods often
combine multiple auxiliary losses, complicating loss balancing without
necessarily improving the return--smoothness trade-off.
We introduce \method{} (Direction-Aware Magnitude-Controlled
Projection with Explicit Return Priority), which combines the native
actor gradient with a temporal-consistency gradient through
conflict-conditioned projection and adaptive magnitude control.
It removes the auxiliary component opposing the actor gradient
and scales the retained temporal direction relative to the actor
gradient norm, preserving positive alignment with the native
actor gradient.
Experiments with TD3 and SAC on six continuous-control tasks
show reduced action oscillation relative to the native agents
in all 12 task--backbone pairs and the best oscillation score
among the compared methods in eight, with task-dependent
return trade-offs. Our code is available at
\url{https://github.com/komin0407/DAMPER}.
\end{abstract}

\section{Introduction}
\label{sec:introduction}

Actor--critic algorithms are effective tools for continuous control, but high
episodic return does not imply a well-behaved control signal.  A learned policy
may vary its action sharply between adjacent time steps even when the system
state changes only slightly.  Simulation often tolerates these oscillations;
physical platforms do not.  Rapid control changes can waste energy, amplify
unmodeled dynamics, and increase actuator wear~\parencite{delapresilla2023oscillating}.  Learning policies that are
both effective and temporally smooth is therefore an important requirement for
deploying reinforcement learning on real systems.

\suppressfloats[t]
\begin{figure}[t]
    \centering
    \includegraphics[width=\linewidth]{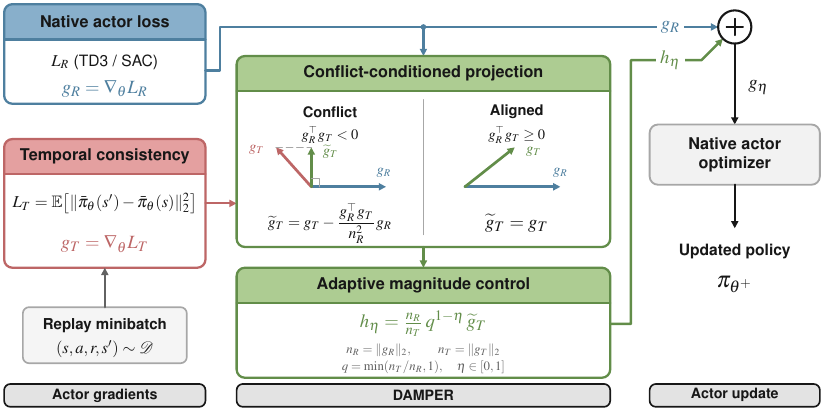}
    \caption{\textbf{Overview of \method{}.}
}
    \label{fig:damper-overview}
    \label{fig:method-geometry}
\end{figure}

Most existing approaches introduce smoothness through an auxiliary loss.
They regularize policy outputs at consecutive or nearby states, constrain local
policy sensitivity, or stabilize the critic gradients used by the actor~\parencite{mysore2021caps,kobayashi2022l2c2,lee2024gradcaps, kwak2026asap,lee2026pave}.
These methods can reduce action variation, but the smoothing they achieve is often limited. Moreover, many existing methods introduce two or more auxiliary losses to control smoothness \parencite{mysore2021caps,kobayashi2022l2c2,kwak2026asap,lee2026pave}, which complicates loss balancing without necessarily improving the return–smoothness trade-off, as illustrated by our comparison with a tuned single-loss baseline in Section~\ref{sec:scalarization_sweep}. 

We approach this problem as primary--auxiliary gradient optimization.  The
primary objective is the native actor loss of the underlying algorithm.  The
auxiliary objective is the established consecutive-state consistency penalty
\(
  \mathbb{E}_{(s,s')}
  [\lVert\pi_\theta(s')-\pi_\theta(s)\rVert_2^2]
\), where \(\pi_\theta\) is the deterministic action used for evaluation.
Our contribution is not this temporal loss itself, but a rule for combining its
gradient with the native actor gradient.

We introduce \method{}, which combines the native actor and temporal gradients through two operations (Figure~\ref{fig:damper-overview}).  First,
when the temporal gradient conflicts with the native actor gradient, \method{}
removes only its return-opposing component.  The native actor gradient is never
projected, and an aligned temporal component is retained rather than discarded.
Second, \method{} controls the remaining direction relative to the native
gradient norm.  A bounded parameter $\eta\in[0,1]$ interpolates between two endpoints: at $\eta=0$, the temporal component is only capped at the native gradient norm, and at $\eta=1$, it is rescaled to that norm.  Unlike a scalar
regularization coefficient, \(\eta\) controls how the observed gradient
magnitudes are reconciled after the objectives have been differentiated. Figure~\ref{fig:damper-overview} provides an overview of these two operations.

Across six continuous-control tasks and two backbones, \method{} improves the
FFT-based oscillation score over the corresponding unregularized agent in all
12 task--backbone pairs and achieves the best score among the compared methods
in 8.  The improvement is more consistent for action stability than for return:
several locomotion settings exhibit a clear smoothness--return trade-off.  We
report this limitation explicitly and isolate the role of conflict-conditioned
projection in an ablation study.

Our contributions are as follows:
\begin{itemize}
    \item We introduce DAMPER, a return-prioritized actor update that
    combines conflict-conditioned projection with norm-ratio interpolation.

    \item We establish finite-step descent for the fixed actor surrogate
    and characterize temporal progress at matched first-order primary
    progress. 
    \item DAMPER reduces action oscillation relative to native TD3 and SAC
on all six tasks, achieving the best score in eight of 12
task--backbone pairs. Projection ablations, interpolation sweeps,
and fixed-weight scalarization comparisons assess the contributions
of directional and magnitude control.
\end{itemize}

\section{Related Work}
\label{sec:related_work}

\subsection{Policy Smoothness in Reinforcement Learning}

\paragraph{Policy-output regularization.}
CAPS combines temporal consistency between consecutive states with spatial
consistency under state perturbations~\parencite{mysore2021caps}.  Closely related
work directly penalizes consecutive deterministic policy outputs
\parencite{decooman2021temporal}.  We use the same form of temporal surrogate, but
do not add it to the native actor loss with a fixed coefficient.  Instead, we
modify its gradient according to its alignment and scale relative to the native
actor gradient.

\paragraph{Sensitivity and critic regularization.}
Grad-CAPS regularizes policy sensitivity~\parencite{lee2024gradcaps}, ASAP combines
action alignment, action prediction, and second-order temporal regularization
\parencite{kwak2026asap}, and L2C2 constrains local changes in both the policy and
value function~\parencite{kobayashi2022l2c2}.  PAVE instead regularizes the critic
to stabilize the action-gradient field that drives policy learning
\parencite{lee2026pave}.  These approaches encode richer notions of regularity than
our single temporal loss, but multi-component scalarized objectives introduce
additional relative weights.  \method{} is complementary: it focuses on how a
temporal auxiliary gradient should enter the actor update when the native actor
objective remains primary.

\paragraph{Structured smooth control.}
Smooth behavior can also arise from architectural or action-space structure.
LipsNet controls policy sensitivity through an adaptive Lipschitz architecture
\parencite{song2023lipsnet}; generalized state-dependent exploration produces
temporally correlated exploration noise~\parencite{raffin2022smooth}; and action-rate
parameterizations smooth the executed control by construction
\parencite{chisari2021racing}.  In contrast, \method{} changes only training-time
actor gradients and leaves the policy architecture and execution interface
unchanged.

\subsection{Gradient Combination and Objective Priority}

\paragraph{Multi-objective optimization.}
MGDA searches for a common descent direction
\parencite{desideri2012mgda,sener2018multiobjective}, GradNorm balances task-gradient
magnitudes~\parencite{chen2018gradnorm}, PCGrad removes conflicting gradient
components~\parencite{yu2020pcgrad}, and CAGrad constructs a conflict-averse common
update~\parencite{liu2021cagrad}.  These methods typically treat objectives
symmetrically.  Our setting is hierarchical: the native actor objective is
always primary, so its gradient is never altered to accommodate temporal
consistency.

\paragraph{Primary--auxiliary learning.}
Gradient-similarity weighting~\parencite{du2018auxiliary}, Dynamic
Barrier~\parencite{gong2021dynamicbarrier}, and Bloop~\parencite{hsieh2024bloop} prevent
auxiliary objectives from degrading a primary task, while
MetaBalance~\parencite{he2022metabalance} adapts auxiliary-gradient magnitudes to a
target gradient.  \method{} combines the two relevant ideas: it conditionally
removes only the primary-opposing temporal component and separately controls
its magnitude through a bounded norm-ratio interpolation.

\paragraph{Priority-aware reinforcement learning.}
PEGrad prioritizes reward over energy minimization by orthogonalizing and
capping the lower-priority gradient~\parencite{peri2025pegrad}; GCR-PPO conditionally
projects lower-priority reward components~\parencite{munn2026gcrppo}; and LPPG-RL
considers lexicographically ordered policy-gradient objectives
\parencite{qiu2026lppgrl}.  FCGrad addresses individual--collective return conflicts
in mixed-motive multi-agent learning and changes priority according to current
returns~\parencite{kim2025fcgrad}.  Our hierarchy is fixed, projection occurs only
under conflict, and the retained temporal direction is scaled by an explicit
interpolation between cap-only and norm-balanced updates.  The contribution is
this combination for temporal policy smoothing, rather than gradient
projection in isolation.

\section{Preliminaries}
\label{sec:preliminaries}

\paragraph{Actor--critic learning.}
We consider a discounted Markov decision process
\(\mathcal M=(\mathcal S,\mathcal A,p,r,\gamma)\), with transition
kernel \(p(s'\mid s,a)\), reward \(r(s,a)\), and discount factor
\(\gamma\in[0,1)\). A policy \(\pi_\theta\) seeks to maximize
the expected return
\[
    J(\theta)
    :=\mathbb E_{\pi_\theta}
    \left[\sum_{t=0}^{\infty}\gamma^t r(s_t,a_t)\right].
\]
Actor--critic methods use a learned action-value function
\(Q_\phi(s,a)\) to guide policy optimization. We adopt the
loss-minimization convention and denote the underlying algorithm's
native actor loss by \(\returnloss(\theta)\).
Algorithm-specific definitions are provided in
Appendix~\ref{app:native_actor_objectives}.

\paragraph{Temporal policy consistency.}
Let \(\pi_\theta(s)\) denote the differentiable deterministic
action map associated with the current policy. For consecutive states
\((s,s')\) drawn from an observed transition distribution
\(\mathcal D\), we define the temporal-consistency loss
\begin{equation}
    \temporalloss(\theta)
    :=\mathbb E_{(s,s')\sim\mathcal D}
    \left[
        \left\lVert\pi_\theta(s')-\pi_\theta(s)\right\rVert_2^2
    \right].
    \label{eq:temporal_consistency_loss}
\end{equation}
Both actions are computed using the same current policy.
Minimizing this loss encourages similar actions at consecutive states,
promoting temporal consistency along environment trajectories.

\paragraph{Combining objectives.}
We denote the actor-parameter gradients by
\begin{equation}
    g_R:=\nabla_\theta\returnloss(\theta),
    \qquad
    g_T:=\nabla_\theta\temporalloss(\theta).
    \label{eq:primary_auxiliary_gradients}
\end{equation}
The term ``return gradient'' refers to \(g_R\), the critic-based
actor-loss gradient, rather than a direct gradient of measured
episodic return. A conventional auxiliary-loss formulation minimizes
\begin{equation}
    \mathcal L_\lambda(\theta)
    :=\returnloss(\theta)+\lambda\temporalloss(\theta),
    \qquad \lambda>0,
    \label{eq:weighted_actor_objective}
\end{equation}
which yields the combined gradient \(g_R+\lambda g_T\).


\section{Method}
\label{sec:method}

We introduce \method{}, an actor update rule designed to reduce action
oscillation while prioritizing return maximization. It uses a
temporal-consistency objective to encourage similar actions at consecutive
states, with the native actor objective taking priority. To combine their
gradients, \method{} first removes any temporal component that opposes the
native actor gradient, then controls the remaining contribution relative
to the native gradient norm. These two operations address directional
conflict and scale imbalance when incorporating temporal consistency
into the actor update. Figure~\ref{fig:damper-overview} summarizes the
complete update.

\subsection{Conflict-Conditioned Projection}
\label{sec:method_projection}

We use conflict-conditioned projection to prevent the temporal gradient
from hindering first-order progress on the native actor objective.
Let \(g_R=\nabla_\theta\returnloss\) and
\(g_T=\nabla_\theta\temporalloss\) denote the gradients of the objectives
defined in Section~\ref{sec:preliminaries}.
When \(g_R^\top g_T<0\), the temporal gradient contains a component
that opposes the native actor gradient. We remove this component by
projecting \(g_T\) onto the subspace orthogonal to \(g_R\).
Otherwise, we leave \(g_T\) unchanged.
Denoting the resulting temporal gradient by \(\widetilde g_T\), we obtain
\begin{equation}
    \widetilde g_T
    :=
    \begin{cases}
        g_T-\dfrac{g_R^\top g_T}{\lVert g_R\rVert_2^2}g_R,
        & g_R^\top g_T<0,\\[2mm]
        g_T, & g_R^\top g_T\geq0.
    \end{cases}
    \label{eq:conflict_projection}
\end{equation}
The native gradient \(g_R\) is never modified.
The retained temporal gradient satisfies
\(g_R^\top\widetilde g_T\geq0\), so adding a nonnegative multiple of
\(\widetilde g_T\) preserves the first-order decrease in the native
actor loss provided by \(g_R\).
The Conflict and Aligned panels in Figure~\ref{fig:damper-overview}
illustrate the two cases.

\subsection{Adaptive Magnitude Control}
\label{sec:method_magnitude}

Projection resolves directional conflict, but a temporal gradient that is small relative to the native actor gradient may have little influence on policy smoothness, whereas a much larger one may dominate the actor update.
We therefore rescale the temporal gradient according to the relative norms of the original actor and temporal gradients.

For nonzero gradients, let $n_R:=\lVert g_R\rVert_2$ and $n_T:=\lVert g_T\rVert_2$.
Before accounting for directional conflict, consider the target combined gradient
$$
g_{\text{target}}(s) \coloneqq g_R + s \frac{g_T}{n_T}, \qquad s > 0.
$$
This target adds a temporal contribution of magnitude $s$ along $g_T$ to the unchanged actor gradient $g_R$. 

The target may, however, reduce first-order progress on the native actor objective relative to using $g_R$ alone.
We therefore make the smallest adjustment, measured in Euclidean distance, to $g_{\text{target}}(s)$ that preserves this progress:
$$
g(s) \coloneqq \argmin_{d: g_R^\top d \geq \lVert g_R \rVert_2^2} \Vert d - g_{\text{target}}(s) \Vert_2^2.
$$
The constraint requires at least the first-order decrease in the native actor loss achieved by $g_R$.
It can be shown (Appendix~\ref{app:gradient_combination}) that the unique solution to the optimization problem is
$$
g(s) = g_R + \frac{s}{n_T} \widetilde g_T,
$$
where $\widetilde g_T$ is exactly the conflict-conditioned projection from the previous subsection.

We consider two reference target magnitudes.
The \textit{cap-only} choice $s_0 \coloneqq \min(n_T, n_R)$ caps the target temporal magnitude at the native actor gradient norm $n_R$, without amplifying weak temporal gradients.
The \textit{norm-matched} choice $s_1 \coloneqq n_R$ instead rescales the temporal contribution to $n_R$ before projection, amplifying it when $n_T < n_R$.

To obtain intermediate behavior, we geometrically interpolate between these choices using a parameter $\eta \in [0, 1]$:
\begin{equation}
s_\eta \coloneqq s_0^{1 - \eta} s_1^\eta, \qquad g_\eta \coloneqq g(s_\eta) = g_R + \frac{s_\eta}{n_T} \widetilde g_T.
\label{eq:merged_gradient}
\end{equation}
Equivalently, defining $q \coloneqq \min(n_T/n_R, 1)$ and $w_\eta \coloneqq q^{1-\eta}$ gives $s_\eta = n_R w_\eta$.
For $n_T < n_R$, increasing $\eta$ strengthens the temporal gradient.

\subsection{DAMPER Actor Update}
\label{sec:method_integration}

Algorithm~\ref{alg:damper_update} summarizes the core actor update
for nonzero gradients. \method{} computes the two gradients separately,
applies conflict-conditioned projection and adaptive magnitude control,
and passes the merged gradient to the native actor optimizer.

\begin{algorithm}[t]
\caption{\method{} actor update}
\label{alg:damper_update}
\small
\DontPrintSemicolon

\KwIn{Actor parameters $\theta$, replay buffer $\mathcal D$,
      native actor loss $\returnloss$, interpolation $\eta\in[0,1]$.}

Sample a minibatch $\mathcal B$ from $\mathcal D$\;
Evaluate $\returnloss(\theta)$ and
$\temporalloss(\theta)$ in \eqref{eq:temporal_consistency_loss}\;

$g_R\gets\nabla_\theta\returnloss$,
$g_T\gets\nabla_\theta\temporalloss$\;
$n_R\gets\lVert g_R\rVert_2$,
$n_T\gets\lVert g_T\rVert_2$\;

$\widetilde g_T\gets g_T$\;
\If{$g_R^\top g_T<0$}{
    $\widetilde g_T\gets
    g_T-\dfrac{g_R^\top g_T}{n_R^2}g_R$\;
}

$q\gets\min(n_T/n_R,1)$,
$w_\eta\gets q^{1-\eta}$\;
$g\gets g_R+(n_R/n_T)w_\eta\widetilde g_T$\;

Assign $g$ to the actor-gradient buffers and apply the native
actor optimizer\;
\end{algorithm}

\section{Analysis of the Primary--Auxiliary Update}
\label{sec:analysis}

We analyze a single Euclidean actor step using the direction
constructed by Algorithm~\ref{alg:damper_update}, with the critic
and sampled transitions held fixed.

\subsection{Descent on the Native Actor Loss}

Consistent with our primary--auxiliary formuation, in which the native actor objective tasks priority over temporal consistency, we first show that \method{} preserves descent on the native actor loss even when the two gradients conflicts.

\begin{theorem}[Finite-step descent for the fixed actor surrogate]
\label{thm:finite_actor_descent}
Assume that \(\returnloss\) is differentiable and globally
\(\beta_R\)-smooth for \(\beta_R>0\), and that
\(g_R=\nabla_\theta\returnloss(\theta)\neq0\).
Let \(g\) be the direction returned by
Algorithm~\ref{alg:damper_update}. Then every step
\begin{equation}
    \theta^+=\theta-\alpha g,
    \qquad
    0<\alpha<\frac{1}{\beta_R},
    \label{eq:finite_descent_step}
\end{equation}
strictly decreases the fixed native actor loss:
\(\returnloss(\theta^+)<\returnloss(\theta)\).
\end{theorem}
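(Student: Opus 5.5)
The plan is to apply the standard descent lemma for a \(\beta_R\)-smooth function along the direction \(g\), and then show that \(g\) is "well aligned" with \(g_R\): its inner product with \(g_R\) is at least \(\lVert g_R\rVert_2^2\), and its squared norm is at most twice that inner product. With these two bounds, the step-size condition \(\alpha<1/\beta_R\) gives strict decrease. Throughout I take \(g_T\neq0\), as in Algorithm~\ref{alg:damper_update}, which is stated for nonzero gradients. If \(g_T=0\), the temporal term is absent, \(g=g_R\), and the argument below applies with \(c=0\).

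\textbf{Step 1 (descent lemma).} Global \(\beta_R\)-smoothness gives
\[
\returnloss(\theta-\alpha g)\le \returnloss(\theta)-\alpha\, g_R^\top g+\tfrac{\beta_R}{2}\alpha^2\lVert g\rVert_2^2 .
\]
So it suffices to show \(\tfrac{\beta_R\alpha}{2}\lVert g\rVert_2^2< g_R^\top g\).

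\textbf{Step 2 (structure of \(g\)).} Write \(g=g_R+c\,\widetilde g_T\) with \(c:=(n_R/n_T)w_\eta\ge0\), and set \(a:=g_R^\top\widetilde g_T\). I will record three facts about \(\widetilde g_T\) from \eqref{eq:conflict_projection}:
\begin{itemize}
\item \(a\ge0\). In the conflict case \(a=0\) because the projection is orthogonal to \(g_R\); otherwise \(a=g_R^\top g_T\ge0\).
\item \(\lVert\widetilde g_T\rVert_2\le n_T\). In the conflict case this is an orthogonal projection, which cannot increase the norm; otherwise \(\widetilde g_T=g_T\).
\item \(c\,\lVert\widetilde g_T\rVert_2\le c\,n_T=n_R w_\eta\le n_R\), since \(q\le1\) and \(1-\eta\ge0\) imply \(w_\eta\le1\).
\end{itemize}
It follows that \(g_R^\top g=n_R^2+ca\ge n_R^2>0\).

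\textbf{Step 3 (norm bound).} Expanding the square,
\[
\lVert g\rVert_2^2=n_R^2+2ca+c^2\lVert\widetilde g_T\rVert_2^2\le n_R^2+2ca+n_R^2=2\,(n_R^2+ca)=2\,g_R^\top g .
\]
Substituting into Step 1 gives
\[
\returnloss(\theta^+)\le\returnloss(\theta)-\alpha(1-\alpha\beta_R)\,g_R^\top g .
\]
Since \(\alpha>0\), \(1-\alpha\beta_R>0\), and \(g_R^\top g\ge n_R^2>0\) (because \(g_R\neq0\)), the right-hand side is strictly less than \(\returnloss(\theta)\).

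\textbf{Main obstacle.} The only real issue is Step 3: ensuring the auxiliary contribution cannot inflate \(\lVert g\rVert_2\) enough to break the \(1/\beta_R\) step-size regime. This is resolved by two facts together. First, the magnitude control caps the scaled temporal norm at \(n_R\), which holds for every \(\eta\in[0,1]\). Second, projection is non-expansive, so \(\lVert\widetilde g_T\rVert_2\le n_T\). The cross term \(2ca\) is harmless because it appears with a factor of \(2\) in \(2\,g_R^\top g\) as well.
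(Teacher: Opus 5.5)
Your proof is correct and follows the same route as the paper's. Both apply the descent lemma and use $g_R^\top g = n_R^2 + (\text{nonnegative term})$ from the conflict-conditioned projection, then bound $\lVert g\rVert_2^2 \le 2\,g_R^\top g$ via non-expansiveness of the projection and $w_\eta\le 1$; the paper writes this in normalized form as $1+2w_\eta c_+ + w_\eta^2(1-c_-^2)\le 2(1+w_\eta c_+)$, and differs from you only in deriving the exact step-size threshold and showing it is at least $1/\beta_R$, whereas you substitute the norm bound directly to obtain the decrease $\alpha(1-\alpha\beta_R)\,g_R^\top g$.
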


Thus, for sufficiently small steps, \method{} incorporates the temporal
gradient while preserving descent on the native actor loss.
The proof is in Appendix~\ref{app:proof_descent}.

\subsection{Temporal Progress at Matched Primary Progress}
\label{sec:matched_progress}

We next compare the temporal progress of DAMPER and fixed-weight scalarization.
For this comparison, we rescale each update to give the same predicted decrease in the native actor loss, using a first-order approximation.
This holds primary improvement fixed, allowing us to compare the temporal benefit of the update directions independently of their overall scale.
For any direction $d$ with $g_R^\top d>0$, define
\begin{equation}
\mathcal G(d) := \frac{g_T^\top d}{g_R^\top d} - \frac{g_T^\top g_R}{\lVert g_R\rVert_2^2}.
\label{eq:matched_temporal_gain}
\end{equation}
The first term measures the predicted temporal decrease per unit of predicted native-actor-loss decrease; subtracting the corresponding quantity for the native-gradient update gives the additional temporal gain.

Using this measure, the next result shows that, for noncollinear gradients and fixed $\eta$, DAMPER achieves greater additional temporal gain than scalarization with any fixed positive weight when the temporal gradient is sufficiently small relative to the native actor gradient.

\begin{theorem}[Temporal gain in the weak-gradient regime]
\label{thm:weak_gradient_advantage}
Let \(g_R=\nabla_\theta L_R(\theta)\) and
\(g_T=\nabla_\theta L_T(\theta)\) be nonzero, and define
\[
    q:=\min\left\{
        \frac{\lVert g_T\rVert_2}{\lVert g_R\rVert_2},1
    \right\},
    \qquad
    c:=\frac{g_R^\top g_T}
    {\lVert g_R\rVert_2\lVert g_T\rVert_2}.
\]
Fix \(\eta\in[0,1]\) and any \(\lambda>0\).
Assume \(0<q<1\) and \(\lambda q<1-\delta\)
for some fixed \(\delta\in(0,1)\).
For the analytic \method{} direction \(g_\eta\) in
\eqref{eq:merged_gradient} and
\(g_\lambda=g_R+\lambda g_T\), the matched temporal gains satisfy
\begin{equation}
    \mathcal G(g_\eta)=\Theta\!\left((1-c^2)q^{2-\eta}\right),
    \qquad
    \mathcal G(g_\lambda)=\Theta\!\left((1-c^2)q^2\right).
    \label{eq:weak_gradient_rates}
\end{equation}
Here, \(\Theta\) denotes two-sided bounds with positive constants
independent of \(q\) and \(c\), but possibly dependent on
\(\lambda\) and \(\delta\).
\end{theorem}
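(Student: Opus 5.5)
The plan is to compute both gains in closed form in an orthonormal frame and then bound the denominators. Two reductions come first. \(\mathcal G(d)\) is invariant under positive rescaling of \(d\). Under the simultaneous rescaling \(g_R\mapsto g_R/\lVert g_R\rVert_2\), \(g_T\mapsto g_T/\lVert g_R\rVert_2\), it changes by a positive factor that is the same for every direction, and \(q\), \(c\), \(g_\lambda\) and \(g_\eta\) transform compatibly. So, modulo checking that this factor does not spoil the \(\Theta\)-constants, I would normalize to \(\lVert g_R\rVert_2=1\), giving \(\lVert g_T\rVert_2=q\) because \(q<1\).

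I would then write \(u:=g_R\) and decompose
\[
g_T=q\bigl(c\,u+\sigma v\bigr),\qquad \sigma:=\sqrt{1-c^2},
\]
with \(v\) a unit vector orthogonal to \(u\); the decomposition is valid whenever \(c\neq\pm1\), and the degenerate cases give zero gain on both sides, consistent with the factor \(1-c^2\). Every quantity in \eqref{eq:matched_temporal_gain} is then a scalar expression in \(q,c,\sigma\), and the subtracted term is simply \(g_T^\top g_R/\lVert g_R\rVert_2^2=qc\).

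\emph{Scalarization.} For \(g_\lambda=u+\lambda q(cu+\sigma v)\) I would compute \(g_T^\top g_\lambda=qc+\lambda q^2\) and \(g_R^\top g_\lambda=1+\lambda qc\). Subtracting \(qc\) gives
\[
\mathcal G(g_\lambda)=\frac{\lambda q^2(1-c^2)}{1+\lambda qc}.
\]
Since \(|\lambda qc|\le\lambda q<1-\delta\), the denominator lies in \((\delta,2)\). This gives the two-sided bound \(\Theta((1-c^2)q^2)\) with constants \(\lambda/2\) and \(\lambda/\delta\). This is the only place \(\delta\) enters; the hypothesis also guarantees \(g_R^\top g_\lambda>0\), so \(\mathcal G(g_\lambda)\) is defined.

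\emph{DAMPER.} Because \(q<1\), we have \(s_\eta=n_Rq^{1-\eta}\), so the coefficient on \(\widetilde g_T\) in \eqref{eq:merged_gradient} is \(\kappa:=s_\eta/n_T=q^{-\eta}\). I would split on the sign of \(c\), following \eqref{eq:conflict_projection}.
\begin{itemize}
\item If \(c\ge0\), then \(\widetilde g_T=g_T\) and the scalarization computation applies verbatim with \(\lambda\) replaced by \(\kappa\):
\[
\mathcal G(g_\eta)=\frac{\kappa q^2(1-c^2)}{1+\kappa qc}.
\]
Here \(\kappa q=q^{1-\eta}\le1\) and \(c\ge0\), so the denominator lies in \([1,2]\) and \(\mathcal G(g_\eta)\) lies between \(\tfrac12 q^{2-\eta}(1-c^2)\) and \(q^{2-\eta}(1-c^2)\).
\item If \(c<0\), then \(\widetilde g_T=q\sigma v\), so \(g_\eta=u+\kappa q\sigma v\). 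This gives \(g_R^\top g_\eta=1\) and \(g_T^\top g_\eta=qc+\kappa q^2\sigma^2\), hence exactly \(\mathcal G(g_\eta)=q^{2-\eta}(1-c^2)\).
\end{itemize}
Combining the two cases gives \(\mathcal G(g_\eta)=\Theta((1-c^2)q^{2-\eta})\) with constants \(1/2\) and \(1\). Positivity of \(g_R^\top g_\eta\), needed for \(\mathcal G\) to be defined, also follows from these formulas.

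The main obstacle is bookkeeping rather than an inequality. The delicate points are the normalization step, the case split from the projection, and confirming that the denominators stay bounded away from zero and infinity uniformly in \(q\) and \(c\). For the DAMPER direction this rests on \(\kappa q=q^{1-\eta}\le1\), which holds for every \(\eta\in[0,1]\) and does not need \(\delta\). For scalarization it needs \(\lambda q<1-\delta\) to handle negative \(c\), where \(1+\lambda qc\) could otherwise approach zero or become negative.
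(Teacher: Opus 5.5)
Your proposal is correct and follows essentially the same route as the paper: both compute the exact gains $\mathcal G(g_\lambda)=\lambda q^2(1-c^2)/(1+\lambda qc)$ and $\mathcal G(g_\eta)=q^{2-\eta}(1-c^2)/(1+q^{1-\eta}\max\{c,0\})$, and then bound the denominators using $q^{1-\eta}\le 1$ and $\lambda q<1-\delta$. Your sign split is the paper's unified formula with $c_+$, and your orthonormal frame is only a presentational difference. The normalization loose end you flag is vacuous: $\mathcal G$ is homogeneous of degree zero jointly in $(g_R,g_T)$ and separately in $d$, while $q$ and $c$ are invariant, so rescaling leaves both gains exactly unchanged and cannot affect the $\Theta$-constants.
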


The result highlights the role of magnitude control in the weak-temporal regime, where the temporal gradient is small relative to the native actor gradient.
For fixed $\eta > 0$ and $\lambda > 0$, DAMPER provides greater additional temporal gain than scalarization and sufficiently small $q$, when the first-order primary progress is matched.
Appendix~\ref{app:proof_weak_gradient} gives the proof.
\section{Experiments}
\label{sec:experiments}
\subsection{Experimental Setup}
\label{sec:experimental_setup}

In this subsection, we describe the tasks, baselines, and evaluation protocol
used to assess \method{}.

\paragraph{Tasks and backbones.}
We consider six continuous-control tasks from Gymnasium
\parencite{towers2024gymnasium} and MuJoCo~\parencite{todorov2012mujoco}:
LunarLander, Pendulum, Reacher, Ant, Hopper, and Walker.  Each method is
evaluated with both TD3 and SAC.  Our implementation and protocol follow the
PAVE codebase, and all neural networks use SiLU activations.

\paragraph{Baselines.}
We compare with the unregularized TD3 and SAC agents and five smooth-control
methods: CAPS, Grad-CAPS, ASAP, L2C2, and PAVE.  The comparison therefore spans
policy-output regularization, policy-sensitivity regularization, joint
policy--value constraints, and critic-gradient stabilization.

\paragraph{Evaluation protocol.}
Each method is trained with five independent random seeds. For each seed,
the learned policy is evaluated for ten episodes; tables report the mean
and standard deviation across seeds. We measure cumulative episodic return
(\(re\), higher is better) and action oscillation
(\(sm\), lower is better). The latter is a spectral smoothness metric
computed using the fast Fourier transform (FFT), following
\textcite{mysore2021caps,christmann2024benchmarking}.


\subsection{Overall Performance}
\label{sec:experimental_results}

In this subsection, we show that \method{} consistently reduces action
oscillation across TD3 and SAC, with task-dependent effects on return.
It lowers \(sm\) relative to the unregularized agents in all 12 task--backbone
pairs and achieves the lowest score among the compared methods in eight
(Tables~\ref{tab:td3-results} and~\ref{tab:sac-results}).
Figure~\ref{fig:action-trajectory} also shows smaller consecutive action
changes on Walker for both backbones.

\begin{figure}[!htbp]
    \centering
    \includegraphics[width=\linewidth]
    {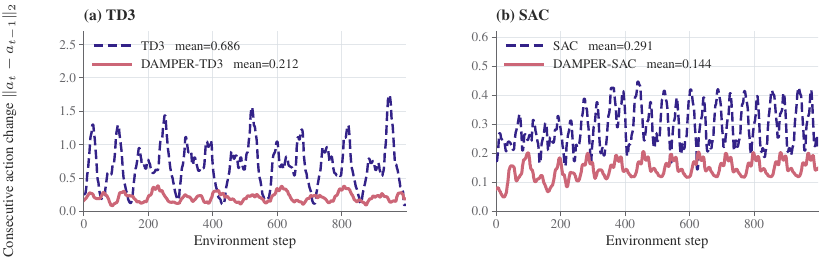}
    \caption{\textbf{\method{} reduces consecutive action variation
    on Walker under both actor--critic backbones.}
    We plot \(\lVert a_t-a_{t-1}\rVert_2\) over a 1,000-step rollout.}
    \label{fig:action-trajectory}

\end{figure}
\begin{table}[t]
\caption{\textbf{TD3 results.} Entries are mean (standard deviation) across
five seeds. Higher return ($re$) and lower oscillation score ($sm$) are
better. The best value in each column is highlighted in \textbf{bold}.}
\label{tab:td3-results}
\centering
\providecommand{\meanstd}[2]{\shortstack[c]{#1\\[-0.15em]{\scriptsize(#2)}}}
\setlength{\tabcolsep}{4pt}
\renewcommand{\arraystretch}{1.25}
\resizebox{\linewidth}{!}{%
\begin{tabular}{@{}l*{12}{c}@{}}
\toprule
& \multicolumn{2}{c}{LunarLander}
& \multicolumn{2}{c}{Pendulum}
& \multicolumn{2}{c}{Reacher}
& \multicolumn{2}{c}{Ant}
& \multicolumn{2}{c}{Hopper}
& \multicolumn{2}{c}{Walker} \\
\cmidrule(lr){2-3}\cmidrule(lr){4-5}\cmidrule(lr){6-7}%
\cmidrule(lr){8-9}\cmidrule(lr){10-11}\cmidrule(lr){12-13}
Method
& $re\!\uparrow$ & $sm\!\downarrow$
& $re\!\uparrow$ & $sm\!\downarrow$
& $re\!\uparrow$ & $sm\!\downarrow$
& $re\!\uparrow$ & $sm\!\downarrow$
& $re\!\uparrow$ & $sm\!\downarrow$
& $re\!\uparrow$ & $sm\!\downarrow$ \\
\midrule
TD3
& \meanstd{205.5}{98.5}  & \meanstd{1.815}{0.955}
& \meanstd{$-$157.6}{84.0} & \meanstd{2.308}{1.422}
& \meanstd{$\mathbf{-3.34}$}{1.45} & \meanstd{0.050}{0.015}
& \meanstd{4749}{1226} & \meanstd{2.044}{0.307}
& \meanstd{\textbf{3604}}{150} & \meanstd{3.061}{0.421}
& \meanstd{4596}{328} & \meanstd{1.937}{0.139} \\
CAPS
& \meanstd{246.1}{81.6} & \meanstd{0.634}{0.233}
& \meanstd{$-$159.4}{81.7} & \meanstd{0.454}{0.140}
& \meanstd{$\mathbf{-3.34}$}{1.47} & \meanstd{0.046}{0.015}
& \meanstd{\textbf{5370}}{275} & \meanstd{2.064}{0.101}
& \meanstd{2928}{1025} & \meanstd{1.730}{0.341}
& \meanstd{5032}{241} & \meanstd{1.865}{0.176} \\
Grad-CAPS
& \meanstd{225.5}{61.4} & \meanstd{0.905}{0.333}
& \meanstd{$\mathbf{-153.4}$}{77.9} & \meanstd{1.011}{0.503}
& \meanstd{$\mathbf{-3.34}$}{1.49} & \meanstd{\textbf{0.040}}{0.011}
& \meanstd{4201}{1859} & \meanstd{1.713}{0.408}
& \meanstd{3371}{619} & \meanstd{1.287}{0.128}
& \meanstd{\textbf{5046}}{460} & \meanstd{1.412}{0.291} \\
ASAP
& \meanstd{245.5}{58.6} & \meanstd{1.345}{0.616}
& \meanstd{$-$155.3}{78.3} & \meanstd{2.013}{1.047}
& \meanstd{$-$3.36}{1.44} & \meanstd{0.048}{0.015}
& \meanstd{4925}{1210} & \meanstd{2.041}{0.343}
& \meanstd{2581}{1448} & \meanstd{1.866}{0.883}
& \meanstd{4880}{745} & \meanstd{1.615}{0.288} \\
L2C2
& \meanstd{218.5}{83.3} & \meanstd{1.626}{0.727}
& \meanstd{$-$156.0}{79.1} & \meanstd{3.293}{1.034}
& \meanstd{$-$3.38}{1.52} & \meanstd{0.048}{0.014}
& \meanstd{4078}{1658} & \meanstd{1.974}{0.402}
& \meanstd{2807}{1371} & \meanstd{2.427}{1.087}
& \meanstd{4960}{1048} & \meanstd{1.707}{0.320} \\
PAVE
& \meanstd{238.1}{71.5} & \meanstd{0.841}{0.601}
& \meanstd{$-$154.5}{80.4} & \meanstd{\textbf{0.410}}{0.121}
& \meanstd{$-$3.55}{1.51} & \meanstd{0.046}{0.016}
& \meanstd{5064}{1245} & \meanstd{1.865}{0.237}
& \meanstd{3328}{541} & \meanstd{1.022}{0.148}
& \meanstd{4996}{700} & \meanstd{1.991}{0.410} \\
\methodplain{} (ours)
& \meanstd{\textbf{266.4}}{35.8} & \meanstd{\textbf{0.313}}{0.036}
& \meanstd{$-$154.6}{78.7} & \meanstd{0.463}{0.131}
& \meanstd{$-$3.63}{1.44} & \meanstd{0.048}{0.016}
& \meanstd{5281}{887} & \meanstd{\textbf{1.261}}{0.147}
& \meanstd{3078}{599} & \meanstd{\textbf{0.240}}{0.035}
& \meanstd{4275}{322} & \meanstd{\textbf{0.346}}{0.088} \\
\bottomrule
\end{tabular}%
}
\end{table}

\paragraph{TD3 results.}
\method{} achieves the lowest oscillation score on LunarLander, Ant, Hopper,
and Walker (Table~\ref{tab:td3-results}). On LunarLander, it improves both
metrics, increasing return from \(205.5\) to \(266.4\) and reducing \(sm\)
from \(1.815\) to \(0.313\). Hopper and Walker also show large reductions
in \(sm\), from \(3.061\) to \(0.240\) and from \(1.937\) to \(0.346\),
respectively, accompanied by lower returns than unregularized TD3.

\begin{table}[t]
\caption{\textbf{SAC results.} Entries are mean (standard deviation) across
five seeds. Higher return ($re$) and lower oscillation score ($sm$) are
better. The best value in each column is highlighted in \textbf{bold}.}
\label{tab:sac-results}
\centering
\providecommand{\meanstd}[2]{\shortstack[c]{#1\\[-0.15em]{\scriptsize(#2)}}}
\setlength{\tabcolsep}{4pt}
\renewcommand{\arraystretch}{1.25}
\resizebox{\linewidth}{!}{%
\begin{tabular}{@{}l*{12}{c}@{}}
\toprule
& \multicolumn{2}{c}{LunarLander}
& \multicolumn{2}{c}{Pendulum}
& \multicolumn{2}{c}{Reacher}
& \multicolumn{2}{c}{Ant}
& \multicolumn{2}{c}{Hopper}
& \multicolumn{2}{c}{Walker} \\
\cmidrule(lr){2-3}\cmidrule(lr){4-5}\cmidrule(lr){6-7}%
\cmidrule(lr){8-9}\cmidrule(lr){10-11}\cmidrule(lr){12-13}
Method
& $re\!\uparrow$ & $sm\!\downarrow$
& $re\!\uparrow$ & $sm\!\downarrow$
& $re\!\uparrow$ & $sm\!\downarrow$
& $re\!\uparrow$ & $sm\!\downarrow$
& $re\!\uparrow$ & $sm\!\downarrow$
& $re\!\uparrow$ & $sm\!\downarrow$ \\
\midrule
SAC
& \meanstd{162.6}{139.9} & \meanstd{0.402}{0.143}
& \meanstd{$-$149.9}{76.9} & \meanstd{0.479}{0.128}
& \meanstd{$-$3.49}{1.43} & \meanstd{0.053}{0.016}
& \meanstd{5001}{1145} & \meanstd{1.935}{0.234}
& \meanstd{3059}{825} & \meanstd{0.708}{0.074}
& \meanstd{4771}{292} & \meanstd{0.747}{0.081} \\
CAPS
& \meanstd{120.0}{171.6} & \meanstd{0.279}{0.065}
& \meanstd{$-$152.7}{80.1} & \meanstd{\textbf{0.325}}{0.107}
& \meanstd{$-$3.48}{1.39} & \meanstd{0.048}{0.014}
& \meanstd{5439}{707} & \meanstd{1.931}{0.170}
& \meanstd{3222}{548} & \meanstd{0.653}{0.058}
& \meanstd{4955}{296} & \meanstd{0.745}{0.164} \\
Grad-CAPS
& \meanstd{269.0}{27.6} & \meanstd{0.277}{0.064}
& \meanstd{$\mathbf{-148.9}$}{75.7} & \meanstd{0.379}{0.119}
& \meanstd{$-$3.51}{1.41} & \meanstd{0.046}{0.013}
& \meanstd{\textbf{5675}}{960} & \meanstd{1.797}{0.248}
& \meanstd{3144}{653} & \meanstd{\textbf{0.494}}{0.061}
& \meanstd{\textbf{5060}}{262} & \meanstd{0.585}{0.067} \\
ASAP
& \meanstd{270.1}{19.6} & \meanstd{0.303}{0.080}
& \meanstd{$-$149.2}{76.0} & \meanstd{0.505}{0.138}
& \meanstd{$-$3.46}{1.40} & \meanstd{0.053}{0.016}
& \meanstd{4564}{2104} & \meanstd{1.727}{0.326}
& \meanstd{2831}{908} & \meanstd{0.637}{0.104}
& \meanstd{5035}{408} & \meanstd{0.760}{0.155} \\
L2C2
& \meanstd{239.4}{58.2} & \meanstd{0.407}{0.168}
& \meanstd{$-$149.3}{76.2} & \meanstd{0.510}{0.145}
& \meanstd{$\mathbf{-3.45}$}{1.38} & \meanstd{0.052}{0.016}
& \meanstd{4947}{1565} & \meanstd{1.810}{0.415}
& \meanstd{3101}{580} & \meanstd{0.732}{0.106}
& \meanstd{4834}{277} & \meanstd{0.901}{0.221} \\
PAVE
& \meanstd{263.6}{24.2} & \meanstd{0.203}{0.034}
& \meanstd{$-$151.6}{78.7} & \meanstd{0.334}{0.114}
& \meanstd{$-$3.51}{1.49} & \meanstd{0.051}{0.015}
& \meanstd{5543}{1007} & \meanstd{1.741}{0.218}
& \meanstd{2403}{862} & \meanstd{0.584}{0.097}
& \meanstd{4544}{646} & \meanstd{0.713}{0.090} \\
\methodplain{} (ours)
& \meanstd{\textbf{281.5}}{24.4} & \meanstd{\textbf{0.102}}{0.037}
& \meanstd{$-$156.6}{78.5} & \meanstd{0.336}{0.164}
& \meanstd{$-$3.57}{1.57} & \meanstd{\textbf{0.038}}{0.018}
& \meanstd{4564}{1577} & \meanstd{\textbf{1.207}}{0.253}
& \meanstd{\textbf{3384}}{273} & \meanstd{0.571}{0.022}
& \meanstd{4402}{1086} & \meanstd{\textbf{0.268}}{0.044} \\
\bottomrule
\end{tabular}%
}
\end{table}

\paragraph{SAC results.}
\method{} achieves the lowest oscillation score on LunarLander, Reacher, Ant,
and Walker (Table~\ref{tab:sac-results}). It also obtains the highest return
on LunarLander and Hopper; on Hopper, \(sm\) decreases from \(0.708\) to
\(0.571\) relative to unregularized SAC. On Ant and Walker, the lowest
oscillation scores are accompanied by lower returns than the unregularized
agent, again showing that the return--smoothness balance depends on the task.

\subsection{Ablation Studies}
\label{sec:ablation}

\paragraph{Projection Ablation.}We show that conflict-conditioned projection achieves
the highest mean return among the evaluated update rules on three TD3 tasks:
LunarLander, Hopper, and Walker. We compare \method{} with
\emph{Unconditional Sum}, which always adds the scaled temporal direction,
and \emph{Unconditional Projection}, which orthogonalizes it at every update.
The former retains components that oppose the native actor gradient, while
the latter removes aligned components as well. 

\begin{table}[!htbp]
\small
\caption{\textbf{Ablation of conflict-conditioned projection with
TD3.} Entries are mean (standard deviation); each variant uses five training
seeds and ten evaluation episodes per seed.  The numerically best mean for each
metric is shown in bold.}
\label{tab:projection-ablation}

\centering
\setlength{\tabcolsep}{4pt}
\resizebox{\linewidth}{!}{%
\begin{tabular}{lcccccc}
\toprule
& \multicolumn{2}{c}{LunarLander}
& \multicolumn{2}{c}{Hopper}
& \multicolumn{2}{c}{Walker} \\
\cmidrule(lr){2-3}\cmidrule(lr){4-5}\cmidrule(lr){6-7}
Method
& \(re\!\uparrow\) & \(sm\!\downarrow\)
& \(re\!\uparrow\) & \(sm\!\downarrow\)
& \(re\!\uparrow\) & \(sm\!\downarrow\) \\
\midrule
\methodplain{}
& \textbf{266.4} {\scriptsize(35.8)} & 0.313 {\scriptsize(0.036)}
& \textbf{3078} {\scriptsize(599)} & 0.240 {\scriptsize(0.035)}
& \textbf{4275} {\scriptsize(322)} & 0.346 {\scriptsize(0.088)} \\
Unconditional projection
& 206.9 {\scriptsize(122.7)} & 0.370 {\scriptsize(0.223)}
& 3050 {\scriptsize(587)} & 0.249 {\scriptsize(0.040)}
& 3525 {\scriptsize(1576)} & 0.314 {\scriptsize(0.072)} \\
Unconditional sum
& 263.5 {\scriptsize(68.8)} & \textbf{0.279} {\scriptsize(0.113)}
& 317 {\scriptsize(341)} & \textbf{0.053} {\scriptsize(0.106)}
& 3575 {\scriptsize(514)} & \textbf{0.268} {\scriptsize(0.088)} \\
\bottomrule
\end{tabular}
}
\end{table}

On Hopper, unconditional summation reduces mean return from \(3078\) to
\(317\), despite achieving a lower oscillation score
(Table~\ref{tab:projection-ablation}). This result highlights the need to
assess smoothness together with task performance. Unconditional projection
yields a similar Hopper return but lowers Walker return from \(4275\) to
\(3525\), with greater variability. On LunarLander, unconditional summation
remains close to \method{} in mean return, whereas unconditional projection
has a lower mean and greater variability. These results support resolving
conflicting components while retaining aligned temporal information.

\begin{table}[!htbp]
\small
\caption{\textbf{PEGrad-based baseline with TD3.}
Mean (standard deviation); five training seeds and ten evaluation
episodes per seed.}
\label{tab:pegrad-comparison}
\centering
\setlength{\tabcolsep}{3pt}
\begin{tabular}{@{}lcc@{}}
\toprule
Task & \(re\!\uparrow\) & \(sm\!\downarrow\) \\
\midrule
LunarLander & 198.8 {\scriptsize(123.5)} & 0.619 {\scriptsize(0.353)} \\
Hopper      & 2760 {\scriptsize(835)}   & 0.636 {\scriptsize(0.046)} \\
Walker      & 4361 {\scriptsize(176)}   & 0.712 {\scriptsize(0.142)} \\
\bottomrule
\end{tabular}
\end{table}

\paragraph{Comparison with a PEGrad-Based Baseline.}
We compare \method{} with a PEGrad-based baseline~\parencite{peri2025pegrad}. This baseline doesn't use conditional projection and limits the temporal gradient's magnitude, but lacks a parameter for strengthening weak temporal gradients relative to the native actor gradient. \method{} can amplify weak temporal gradients through
\(w_\eta=q^{1-\eta}\). Across the three tasks, \method{} lowers the
mean oscillation score by \(49\)--\(62\%\)
(Tables~\ref{tab:projection-ablation} and~\ref{tab:pegrad-comparison}),
while achieving higher mean returns on LunarLander and Hopper and a
slightly lower mean return on Walker. These results are consistent with
the benefit of magnitude interpolation alongside conflict-conditioned
projection.

\subsection{Comparison with Fixed-Weight Scalarization}
\label{sec:scalarization_sweep}

We evaluate fixed-weight scalarization on all six TD3 tasks. For each of
the 20 weights \(\lambda\in\Lambda:=\{0.05,0.10,\ldots,1.0\}\), we train
agents using
\begin{equation}
    \mathcal L_{\lambda}(\theta)
    :=
    \returnloss(\theta)
    +\lambda\temporalloss(\theta),
    \qquad \lambda\in\Lambda,
    \label{eq:fixed_scalarization_sweep}
\end{equation}
with \(\lambda\) fixed throughout training.
Figure~\ref{fig:lambda_scalarization_sweep} shows the Hopper comparison
with \method{} at \(\eta=1\).

\begin{figure}[t]
    \centering
    \includegraphics[width=0.92\linewidth]
        {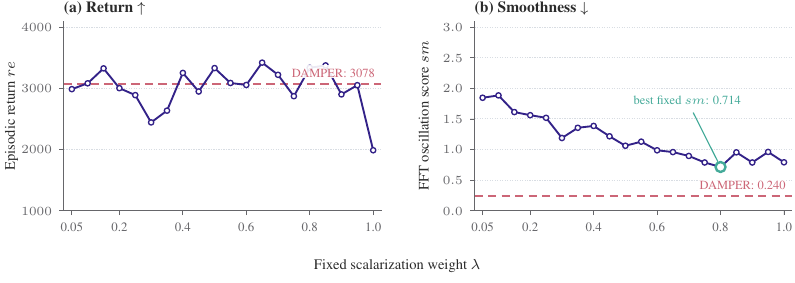}
    \caption{\textbf{Fixed-weight scalarization on TD3 Hopper.}
    Points show means for \(\lambda=0.05,0.10,\ldots,1.0\);
    dashed lines mark \method{}.}
    \label{fig:lambda_scalarization_sweep}
\end{figure}

Across the six tasks, \method{} achieves a lower mean oscillation score
than every evaluated fixed weight on five tasks, with Reacher as the
exception. On Hopper, even the smoothest scalarized policy,
obtained at \(\lambda=0.8\), has \(sm=0.714\), compared with
\(0.240\) for \method{}.
These results are consistent with the temporal-progress analysis
in Theorem~\ref{thm:weak_gradient_advantage}.

Tuned scalarization nevertheless achieves competitive returns and lower
oscillation than several specialized smoothness baselines. On LunarLander,
Ant, and Hopper, selected weights improve both mean metrics over multiple
such baselines, suggesting limited additional benefit from their more
elaborate mechanisms over a well-tuned single temporal loss in these
settings (Appendix~\ref{app:scalarization_sweeps}).

\subsection{Effect of the Interpolation Parameter}
\label{sec:eta_ablation}

\begin{figure}[!htbp]
    \centering
    \includegraphics[width=0.65\linewidth]
        {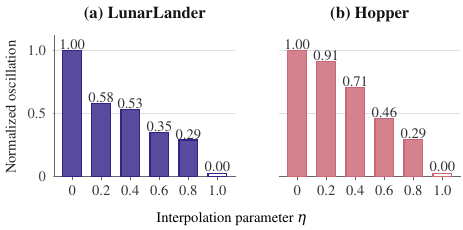}
    \hfill
    \includegraphics[width=0.334\linewidth]
        {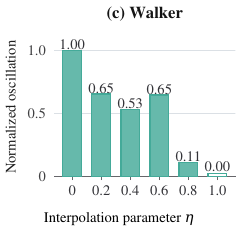}
    \setlength{\abovecaptionskip}{3pt}
    \setlength{\belowcaptionskip}{0pt}
    \small
    \caption{Effect of \(\eta\) on task-normalized action
    oscillation with TD3.}
    \label{fig:eta_smoothness_ablation}
\end{figure}
We sweep \(\eta\in\mathcal H=\{0,0.2,\ldots,1.0\}\) on LunarLander,
Hopper, and Walker using TD3. For each task \(e\), we normalize
the mean oscillation score over the sweep as
\begin{equation}
    \widetilde{sm}_{e}(\eta)
    =\frac{sm_{e}(\eta)-m_e}{M_e-m_e},
    \label{eq:normalized_eta_smoothness}
\end{equation}
where \(m_e\) and \(M_e\) are the minimum and maximum mean scores
over \(\mathcal H\), respectively. Zero therefore denotes the
smoothest tested setting for each task; absolute scores remain
task-dependent.

Figure~\ref{fig:eta_smoothness_ablation} shows that oscillation
decreases monotonically on LunarLander and Hopper, while Walker
exhibits a temporary increase at \(\eta=0.6\). All three tasks
attain their lowest mean score at \(\eta=1\). This trend is
consistent with Theorem~\ref{thm:weak_gradient_advantage}:
increasing \(\eta\) makes \(\mathcal G(g_\eta)\) nondecreasing
at matched first-order primary progress.

\section{Conclusion}
\label{sec:conclusion}

We introduced \method{}, a primary--auxiliary actor update that combines
conflict-conditioned temporal-gradient projection with norm-ratio magnitude
control. The update leaves the native actor gradient unchanged, admits a local
descent guarantee for a fixed critic and minibatch, and reduces the reported
action-oscillation score across all 12 evaluated task--backbone pairs.
Return is not uniformly preserved, however, and our theoretical analysis
is limited to a single actor update rather than the full training process.
Evaluation on physical systems is an important next step for understanding
when smoother control can be obtained without sacrificing task performance.

\section*{AI Use Statement}

In this work, we used generative AI tools to assist with writing and coding.
For writing, AI tools were used to edit and polish author-written text for
grammar, clarity, and readability, and to help format \LaTeX{} (e.g., tables
and references). For coding, AI tools were used to assist in creating and
editing software code, including debugging, refactoring, and writing
experiment and plotting scripts.

Generative AI tools were also used to assist with checking and refining mathematical derivations and proofs, including identifying gaps and assessing consistency between theorem statements, assumptions, and supporting arguments. These tools served as supplementary aids, with the authors retaining responsibility for the validity of all theoretical claims and proofs. AI tools were not used to originate research hypotheses or design the research methodology or experiments. Synthetic dataset generation, translation, dataset cleaning, and qualitative data analysis are not applicable to this work.

We have reviewed all AI-assisted work. All AI-edited text was checked by the
authors to ensure that it accurately reflects our intended meaning and claims.
All AI-assisted code was reviewed, tested, and verified for correctness by the
authors before being used to produce the reported results. We take
responsibility for the final content of this work, including text, claims, and
artifacts produced with the aid of generative AI.

\FloatBarrier
\section*{References}
\printbibliography[heading=none]

\appendix
\clearpage
\section{Optimization Characterization of the Combined Gradient}
\label{app:gradient_combination}

The following lemma establishes the closed-form solution to the
optimization problem in Section~4.2.

\begin{lemma}[Closest primary-preserving combined gradient]
\label{lem:primary_preserving_combination}
Let $g_R,g_T\in\mathbb{R}^p$ be nonzero gradients, with
$n_R:=\lVert g_R\rVert_2$ and $n_T:=\lVert g_T\rVert_2$.
For any $s>0$, define the target combined gradient
\[
g_{\mathrm{target}}(s):=g_R+s\frac{g_T}{n_T}.
\]
Then the optimization problem
\begin{equation}
\label{eq:app_combination_problem}
g(s):=
\underset{d\in\mathbb{R}^p:\,g_R^\top d\ge n_R^2}
{\operatorname{arg\,min}}
\left\lVert d-g_{\mathrm{target}}(s)\right\rVert_2^2
\end{equation}
has the unique solution
\begin{equation}
\label{eq:app_combination_solution}
g(s)=g_R+\frac{s}{n_T}\widetilde g_T,
\qquad
\widetilde g_T
:=
g_T-\frac{\min\{g_R^\top g_T,0\}}{n_R^2}g_R.
\end{equation}
Here, $\widetilde g_T$ is the conflict-conditioned projection
defined in Section~4.1.
\end{lemma}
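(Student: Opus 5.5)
The plan is to recognize \eqref{eq:app_combination_problem} as the Euclidean projection of the point \(g_{\mathrm{target}}(s)\) onto the closed half-space
\[
H:=\{d\in\mathbb{R}^p:\ g_R^\top d\ge n_R^2\}.
\]
Since \(g_R\neq0\), the set \(H\) is nonempty, closed, and convex, and the objective is strongly convex. Existence and uniqueness of the minimizer therefore follow immediately from the standard projection theorem onto closed convex sets. What remains is to compute the projection explicitly.

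\textbf{Step 1: the half-space projection formula.} For \(x\in\mathbb{R}^p\), I will use
\[
P_H(x)=x+\frac{\max\{n_R^2-g_R^\top x,\,0\}}{n_R^2}\,g_R .
\]
To justify it, split into two cases. If \(x\in H\), the point is its own projection. If \(x\notin H\), write the KKT conditions for minimizing \(\lVert d-x\rVert_2^2\) subject to \(n_R^2-g_R^\top d\le 0\). These give \(d=x+\mu g_R\) with \(\mu\ge0\), and complementary slackness forces the constraint to be active, so \(\mu=(n_R^2-g_R^\top x)/n_R^2\). Since the problem is convex and Slater's condition holds trivially (\(d=2g_R\) is strictly feasible), KKT is sufficient. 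Alternatively, one can verify the variational inequality \((x-P_H(x))^\top(d-P_H(x))\le0\) for all \(d\in H\) directly.

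\textbf{Step 2: substitute \(x=g_{\mathrm{target}}(s)\).} Here
\[
g_R^\top x=n_R^2+\frac{s}{n_T}\,g_R^\top g_T,
\]
so that
\[
n_R^2-g_R^\top x=-\frac{s}{n_T}\,g_R^\top g_T .
\]
Because \(s/n_T>0\), the positive part satisfies
\[
\max\Bigl\{-\tfrac{s}{n_T}\,g_R^\top g_T,\,0\Bigr\}=-\tfrac{s}{n_T}\min\{g_R^\top g_T,0\}.
\]
Plugging this in gives
\[
g(s)=g_R+\frac{s}{n_T}\Bigl(g_T-\frac{\min\{g_R^\top g_T,0\}}{n_R^2}\,g_R\Bigr),
\]
which is exactly \eqref{eq:app_combination_solution}.

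\textbf{Step 3: match with Section~4.1.} If \(g_R^\top g_T<0\), then \(\min\{g_R^\top g_T,0\}=g_R^\top g_T\) and the expression is the orthogonal projection in \eqref{eq:conflict_projection}. Otherwise the minimum is \(0\) and \(\widetilde g_T=g_T\), matching the aligned case.

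No step is a serious obstacle. The only point needing care is Step 1, specifically the infeasible case: one must argue that the constraint is active at the optimum, rather than merely writing down the Lagrangian. I would handle this with the variational-inequality check, which is a one-line computation once the candidate is in hand.
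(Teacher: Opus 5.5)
Your proposal is correct and is essentially the paper's argument: the paper also splits on whether $g_{\mathrm{target}}(s)$ is feasible (i.e., on the sign of $g_R^\top g_T$), and in the conflicting case it certifies $d^\star$ directly through $\lVert d-t\rVert_2^2-\lVert d^\star-t\rVert_2^2=\lVert d-d^\star\rVert_2^2+2(d-d^\star)^\top(d^\star-t)\ge\lVert d-d^\star\rVert_2^2$. That is the variational-inequality check you mention, and it gives uniqueness without invoking the projection theorem or KKT. As a minor aside, KKT conditions are sufficient for any convex problem; Slater's condition (or linearity of the constraint) is only needed for necessity, not sufficiency.
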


\begin{proof}
Write $t:=g_{\mathrm{target}}(s)$ and $b:=g_R^\top g_T$.

If $b\ge0$, then
\[
g_R^\top t=n_R^2+\frac{s}{n_T}b\ge n_R^2.
\]
Thus, $t$ is feasible and attains an objective value of zero,
making it the unique minimizer. Since $\widetilde g_T=g_T$
in this case, the claimed solution follows.

If $b<0$, define
\[
d^\star
:=
t-\frac{s b}{n_T n_R^2}g_R.
\]
Then $g_R^\top d^\star=n_R^2$, so $d^\star$ is feasible.
Moreover, $d^\star-t$ is a positive multiple of $g_R$.
For any feasible $d$, we have
$g_R^\top(d-d^\star)\ge0$, and hence
\[
\begin{aligned}
\lVert d-t\rVert_2^2-\lVert d^\star-t\rVert_2^2
&=
\lVert d-d^\star\rVert_2^2
+2(d-d^\star)^\top(d^\star-t)\\
&\ge
\lVert d-d^\star\rVert_2^2.
\end{aligned}
\]
The right-hand side is strictly positive unless $d=d^\star$.
Therefore, $d^\star$ is the unique minimizer. Substituting
the definition of $t$ gives
\[
d^\star
=
g_R+\frac{s}{n_T}
\left(g_T-\frac{b}{n_R^2}g_R\right)
=
g_R+\frac{s}{n_T}\widetilde g_T,
\]
as required.
\end{proof}

\paragraph{Connection to magnitude control.}
The cap-only and norm-matched target magnitudes are
$s_0:=\min(n_T,n_R)$ and $s_1:=n_R$, respectively.
For $\eta\in[0,1]$, define
\[
s_\eta:=s_0^{\,1-\eta}s_1^{\,\eta},
\qquad
q:=\min\!\left(\frac{n_T}{n_R},1\right),
\qquad
w_\eta:=q^{\,1-\eta}.
\]
Since $s_0=n_Rq$ and $s_1=n_R$, we obtain
$s_\eta=n_Rw_\eta$. Lemma~\ref{lem:primary_preserving_combination}
therefore yields
\begin{equation}
\label{eq:app_interpolation_equivalence}
g_\eta
=
g(s_\eta)
=
g_R+\frac{s_\eta}{n_T}\widetilde g_T
=
g_R+\frac{n_R}{n_T}w_\eta\widetilde g_T,
\end{equation}
recovering the norm-ratio combination rule.

Because $\lVert\widetilde g_T\rVert_2\le n_T$ and
$s_\eta\le n_R$, the temporal contribution satisfies
\begin{equation}
\label{eq:app_auxiliary_norm_bound}
\lVert g_\eta-g_R\rVert_2
=
s_\eta\frac{\lVert\widetilde g_T\rVert_2}{n_T}
\le s_\eta\le n_R.
\end{equation}
Thus, $s_\eta$ specifies the temporal magnitude before projection,
not necessarily the magnitude of the final temporal contribution.
Under conflict, $\lVert\widetilde g_T\rVert_2<n_T$, so this
contribution is strictly smaller than $s_\eta$.
Using the original norm $n_T$ in the denominator preserves
this attenuation rather than amplifying the projected gradient
back to the target magnitude.

\section{Proofs for the Primary--Auxiliary Analysis}
\label{app:theory}

\subsection{Proof of Theorem~\ref{thm:finite_actor_descent}}
\label{app:proof_descent}

\begin{proof}
Write \(r:=g_R\) and \(n_R:=\lVert r\rVert_2>0\).
If Algorithm~\ref{alg:damper_update} activates its numerical
guard, then \(g=r\). The descent lemma gives
\begin{equation}
\begin{split}
    \returnloss(\theta-\alpha r)
    &\leq
    \returnloss(\theta)
    -\alpha\lVert r\rVert_2^2
    +\frac{\beta_R\alpha^2}{2}\lVert r\rVert_2^2\\
    &=
    \returnloss(\theta)
    -\alpha\left(1-\frac{\beta_R\alpha}{2}\right)n_R^2.
\end{split}
\label{eq:fallback_descent}
\end{equation}
This is strictly smaller than \(\returnloss(\theta)\) whenever
\(0<\alpha<2/\beta_R\), and hence under the stated condition
\(0<\alpha<1/\beta_R\).

Otherwise, \(g=g_\eta\). Define
\(u_R:=g_R/n_R\) and
\(\widetilde u_T:=\widetilde g_T/n_T\),
where \(n_T:=\lVert g_T\rVert_2\).
Let
\begin{equation}
    c_+:=\max\{c,0\},
    \qquad
    c_-:=\max\{-c,0\}.
    \label{eq:c_plus_minus}
\end{equation}
From \eqref{eq:conflict_projection},
\begin{equation}
    u_R^\top\widetilde u_T=c_+,
    \qquad
    \lVert\widetilde u_T\rVert_2^2=1-c_-^2.
    \label{eq:projection_properties_proof}
\end{equation}
Using \(r=n_Ru_R\) and \eqref{eq:merged_gradient},
\begin{align}
    r^\top g_\eta
    &=
    n_R^2u_R^\top
    \left(u_R+w_\eta\widetilde u_T\right)
    \nonumber\\
    &=
    n_R^2\left(1+w_\eta c_+\right)>0.
    \label{eq:return_alignment_proof}
\end{align}
Moreover,
\begin{align}
    \lVert g_\eta\rVert_2^2
    &=
    n_R^2
    \left\lVert u_R+w_\eta\widetilde u_T\right\rVert_2^2
    \nonumber\\
    &=
    n_R^2
    \left[1+2w_\eta c_+
    +w_\eta^2(1-c_-^2)\right].
    \label{eq:merged_norm_proof}
\end{align}
The descent lemma therefore yields
\begin{align}
    \returnloss(\theta-\alpha g_\eta)-\returnloss(\theta)
    \leq{}&
    -\alpha n_R^2(1+w_\eta c_+)
    \nonumber\\
    &+
    \frac{\beta_R\alpha^2n_R^2}{2}
    \left[1+2w_\eta c_+
    +w_\eta^2(1-c_-^2)\right].
    \label{eq:return_difference_proof}
\end{align}
The right-hand side is strictly negative when
\begin{equation}
    0<\alpha<
    \frac{2(1+w_\eta c_+)}
    {\beta_R[1+2w_\eta c_++w_\eta^2(1-c_-^2)]}.
    \label{eq:exact_descent_bound}
\end{equation}
Because the numerical guard is inactive, \(q\in(0,1]\),
and hence \(w_\eta=q^{1-\eta}\in(0,1]\).
Consequently,
\begin{align}
    1+2w_\eta c_++w_\eta^2(1-c_-^2)
    &\leq 2+2w_\eta c_+
    \nonumber\\
    &=2(1+w_\eta c_+).
\end{align}
The upper bound in \eqref{eq:exact_descent_bound} is therefore
at least \(1/\beta_R\), proving the theorem.
\end{proof}

\subsection{Proof of Theorem~\ref{thm:weak_gradient_advantage}}
\label{app:proof_weak_gradient}

\paragraph{Matched-progress interpretation.}
For \(g_R^\top d>0\), define
\[
    \widehat d
    :=
    \frac{\lVert g_R\rVert_2^2}{g_R^\top d}\,d.
\]
Then \(g_R^\top\widehat d=\lVert g_R\rVert_2^2\), so
\(\widehat d\) and \(g_R\) yield the same first-order
decrease in the native actor loss.
As \(\alpha\to0\), a Taylor expansion gives
\begin{equation}
\begin{aligned}
    &L_T(\theta-\alpha g_R)
    -L_T(\theta-\alpha\widehat d)\\
    &\qquad
    =\alpha g_T^\top(\widehat d-g_R)+o(\alpha)\\
    &\qquad
    =\alpha\lVert g_R\rVert_2^2
    \left[
        \frac{g_T^\top d}{g_R^\top d}
        -
        \frac{g_T^\top g_R}{\lVert g_R\rVert_2^2}
    \right]
    +o(\alpha)\\
    &\qquad
    =\alpha\lVert g_R\rVert_2^2\mathcal G(d)+o(\alpha).
\end{aligned}
\label{eq:gain_interpretation}
\end{equation}
Hence, \(\mathcal G(d)\) measures the additional first-order
temporal decrease per unit primary decrease relative to
the native-gradient update.

\begin{proof}
Write \(w:=q^{1-\eta}\) and \(c_+:=\max\{c,0\}\).
Since \(0<q<1\), the cap is inactive,
\(q=\lVert g_T\rVert_2/\lVert g_R\rVert_2\), and \(0<w\leq1\).
The projection rule gives
\[
    g_\eta
    =
    \bigl(1-w\min\{c,0\}\bigr)g_R
    +q^{-\eta}g_T.
\]
Consequently,
\[
    g_R^\top g_\eta
    =
    \lVert g_R\rVert_2^2(1+wc_+)>0.
\]
Substitution into \eqref{eq:matched_temporal_gain} yields
\begin{equation}
    \mathcal G(g_\eta)
    =
    \frac{q^{2-\eta}(1-c^2)}
         {1+wc_+}.
    \label{eq:damper_exact_gain}
\end{equation}

For scalarization, \(g_\lambda=g_R+\lambda g_T\), and
\[
    g_R^\top g_\lambda
    =
    \lVert g_R\rVert_2^2(1+\lambda qc).
\]
Since \(\lambda q<1-\delta\) and \(\lvert c\rvert\leq1\),
\begin{equation}
    \delta<1+\lambda qc<2-\delta.
    \label{eq:scalar_denominator_bound}
\end{equation}
Thus, the scalarized direction also has positive primary progress.
Direct substitution gives
\begin{equation}
\begin{aligned}
    \mathcal G(g_\lambda)
    &=
    q\frac{c+\lambda q}{1+\lambda qc}-qc\\
    &=
    \frac{\lambda q^2(1-c^2)}
         {1+\lambda qc}.
\end{aligned}
\label{eq:scalar_exact_gain}
\end{equation}

Since \(1\leq1+wc_+\leq2\), \eqref{eq:damper_exact_gain} gives
\begin{equation}
    \frac12(1-c^2)q^{2-\eta}
    \leq
    \mathcal G(g_\eta)
    \leq
    (1-c^2)q^{2-\eta}.
    \label{eq:damper_gain_bounds}
\end{equation}
Similarly, combining \eqref{eq:scalar_exact_gain} with
\eqref{eq:scalar_denominator_bound} yields
\begin{equation}
    \frac{\lambda}{2-\delta}(1-c^2)q^2
    \leq
    \mathcal G(g_\lambda)
    \leq
    \frac{\lambda}{\delta}(1-c^2)q^2.
    \label{eq:scalar_gain_bounds}
\end{equation}
Because \(\lambda>0\) and \(\delta\in(0,1)\) are fixed,
these bounds establish \eqref{eq:weak_gradient_rates},
with positive constants independent of \(q\) and \(c\).
They also cover \(\lvert c\rvert=1\), where both gains vanish.

Finally, if \(\lvert c\rvert<1\), both gains are positive,
and their common factor \(1-c^2\) cancels:
\begin{equation}
    \frac{\mathcal G(g_\eta)}{\mathcal G(g_\lambda)}
    =
    \frac{q^{-\eta}}{\lambda}
    \frac{1+\lambda qc}{1+wc_+}.
    \label{eq:relative_exact_gain}
\end{equation}
Combining \(1\leq1+wc_+\leq2\) with
\eqref{eq:scalar_denominator_bound} gives
\begin{equation}
    \frac{\delta}{2\lambda}q^{-\eta}
    \leq
    \frac{\mathcal G(g_\eta)}{\mathcal G(g_\lambda)}
    \leq
    \frac{2-\delta}{\lambda}q^{-\eta}.
    \label{eq:relative_gain_bounds}
\end{equation}
Thus,
\[
    \frac{\mathcal G(g_\eta)}{\mathcal G(g_\lambda)}
    =\Theta(q^{-\eta}),
\]
with constants independent of \(q\) and \(c\).
For every fixed \(\eta>0\) and \(\lambda>0\), the lower
bound diverges as \(q\to0\), establishing the claimed
asymptotic advantage for noncollinear gradients.
\end{proof}

\section{Additional Fixed-Weight Scalarization Results}
\label{app:scalarization_sweeps}

We report the five TD3 task sweeps omitted from
Section~\ref{sec:scalarization_sweep}: LunarLander, Pendulum, Reacher,
Ant, and Walker. Each sweep uses the 20 fixed weights
\(\Lambda=\{0.05,0.10,\ldots,1.0\}\), with five training seeds and
ten evaluation episodes per seed. Points show mean return and
FFT-based oscillation score; dashed lines show the reported TD3 means
of \method{}. The highlighted point has the lowest mean oscillation
score within each sweep. Axis ranges vary by task.

\begingroup
\newcommand{\scalarizationappendixplot}[4]{%
    \par\noindent
    \begin{minipage}{\linewidth}
        \centering
        \includegraphics[width=0.88\linewidth]{figures/lambda_scalarization_sweep_#1.pdf}\par
        \vspace{2pt}
        \refstepcounter{figure}\label{#2}%
        {\small\raggedright\noindent
        \textbf{Figure \thefigure: Fixed-weight scalarization on TD3 #3.}
        #4\par}
    \end{minipage}\par
}

\smallskip
\scalarizationappendixplot{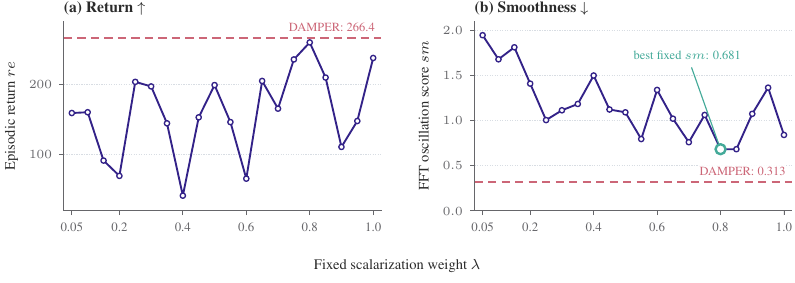}{fig:lambda_scalarization_lunarlander}{LunarLander}{%
    The lowest scalarization score is \(sm=0.681\) at
    \(\lambda=0.80\), compared with \(0.313\) for \method{}.}

\medskip
\scalarizationappendixplot{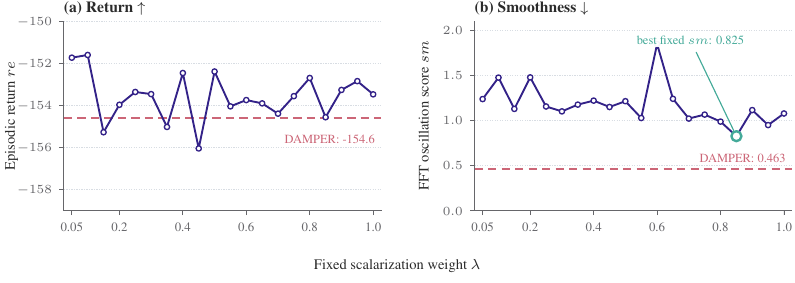}{fig:lambda_scalarization_pendulum}{Pendulum}{%
    The lowest scalarization score is \(sm=0.825\) at
    \(\lambda=0.85\), compared with \(0.463\) for \method{}.}

\medskip
\scalarizationappendixplot{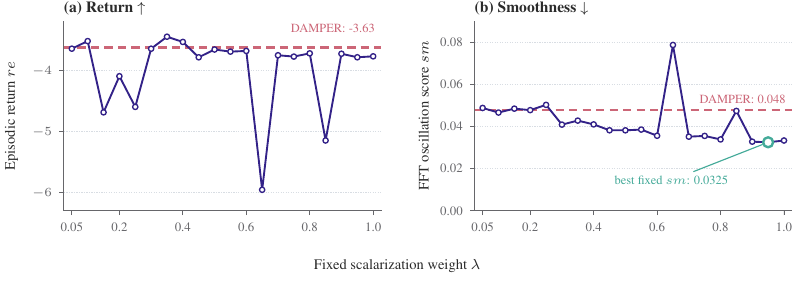}{fig:lambda_scalarization_reacher}{Reacher}{%
    Reacher is the exception: scalarization reaches \(sm=0.0325\)
    at \(\lambda=0.95\), below the reported \(0.048\) for \method{}.}

\medskip

\scalarizationappendixplot{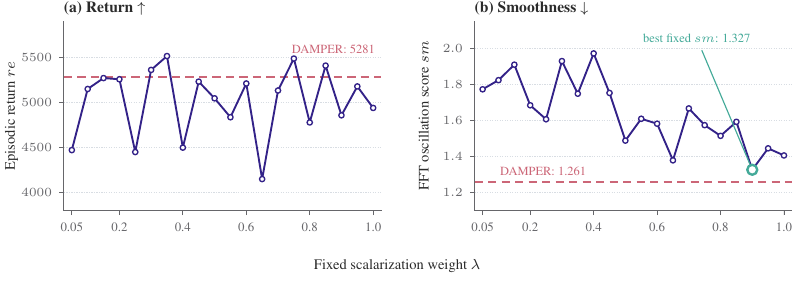}{fig:lambda_scalarization_ant}{Ant}{%
    The lowest scalarization score is \(sm=1.327\) at
    \(\lambda=0.90\), compared with \(1.261\) for \method{}.}

\medskip
\scalarizationappendixplot{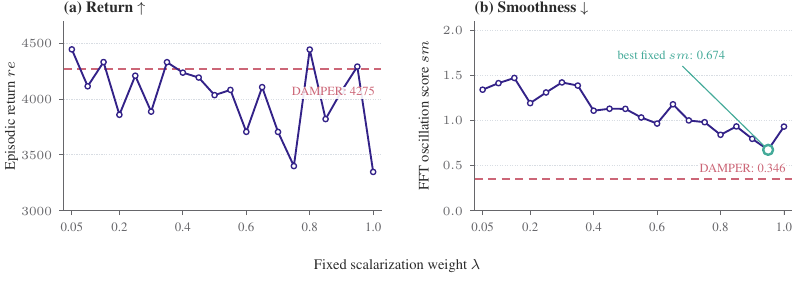}{fig:lambda_scalarization_walker}{Walker}{%
    The lowest scalarization score is \(sm=0.674\) at
    \(\lambda=0.95\), compared with \(0.346\) for \method{}.}

\paragraph{Comparison with smoothing baselines.}
We compare tuned scalarization with the five smoothing baselines in
Table~\ref{tab:td3-results}, excluding \method{}. On Ant, the weight \(\lambda=0.75\)
gives \((re,sm)=(5486,1.574)\). Its return exceeds the highest
baseline return (\(5370\), CAPS), while its oscillation score is
\(8.1\%\) below the lowest baseline score (\(1.713\), Grad-CAPS).
Thus, it improves both metrics over all five baselines.
On LunarLander, \(\lambda=0.80\) gives \((259.8,0.681)\),
improving both metrics over Grad-CAPS, ASAP, L2C2, and PAVE;
PAVE, for example, obtains \((238.1,0.841)\).
On Hopper, \(\lambda=0.80\) gives \((3338,0.714)\), improving
both metrics over CAPS, ASAP, L2C2, and PAVE. Its oscillation score
is \(30.1\%\) below the best baseline score (\(1.022\), PAVE),
with a similar return (\(3338\) versus \(3328\)).
On Pendulum, \(\lambda=0.85\) gives \((-154.6,0.825)\),
improving on both ASAP (\(-155.3,2.013\)) and L2C2
(\(-156.0,3.293\)) in both metrics.

These gains are task dependent. CAPS remains smoother on LunarLander,
and CAPS and PAVE remain smoother on Pendulum. The lowest scalarization
scores on Reacher (\(0.0325\)) and Walker (\(0.674\)) improve on
all five baselines, but their returns are lower: \(-3.79\) versus
\(-3.55\) to \(-3.34\) on Reacher, and \(4293\) versus
\(4880\) to \(5046\) on Walker. These results make a tuned
temporal-consistency loss a strong
baseline: additional smoothing mechanisms do not consistently improve
the return--smoothness trade-off in these experiments. Comparisons use
reported means and weights selected from the sweep; they do not
establish statistical significance.
\par
\endgroup
\FloatBarrier
\section{Experimental Details}
\label{app:experimental_hyperparameters}

\paragraph{Baseline configurations.}
We follow the baseline hyperparameter settings reported in the PAVE
appendix~\parencite{lee2026pave}. Table~\ref{tab:baseline_hyperparameters}
lists the settings for CAPS, Grad-CAPS, ASAP, and PAVE from
Appendix E.1 (Tables 10 and 11) of that work. Grad-CAPS is denoted
as GRAD in the original tables. CAPS, Grad-CAPS, and ASAP use the same
settings under TD3 and SAC; PAVE uses backbone-specific loss weights.

\begin{table}[!htbp]
    \centering
    \caption{Baseline hyperparameters adopted from PAVE.
    Settings apply to both TD3 and SAC unless a backbone is specified.
    Parameter notation follows each baseline.}
    \label{tab:baseline_hyperparameters}
    \small
    \setlength{\tabcolsep}{3.5pt}
    \renewcommand{\arraystretch}{1.12}
    \begin{tabular}{@{}llcccccc@{}}
        \toprule
        Method & Parameter & LunarLander & Pendulum & Reacher & Ant & Hopper & Walker \\
        \midrule
        CAPS & \(\lambda_T\) & 0.1 & 1.0 & 0.1 & 0.1 & 0.1 & 0.1 \\
             & \(\lambda_S\) & 0.5 & 5.0 & 0.5 & 0.5 & 0.5 & 0.5 \\
             & \(\sigma\)    & 0.2 & 0.2 & 0.2 & 0.2 & 0.2 & 0.2 \\
        \addlinespace[3pt]
        Grad-CAPS & \(\lambda_T\) & 1.0 & 1.0 & 1.0 & 1.0 & 1.0 & 1.0 \\
        \addlinespace[3pt]
        ASAP & \(\lambda_T\) & 0.005 & 0.005 & 0.1 & 0.05 & 0.07 & 0.05 \\
             & \(\lambda_S\) & 0.03  & 0.03  & 0.1 & 0.3  & 0.3  & 0.3 \\
             & \(\lambda_P\) & 2.0   & 2.0   & 2.0 & 2.0  & 2.0  & 2.0 \\
        \midrule
        PAVE (TD3) & \(\lambda_1\) & 0.1  & 2.0   & 0.1  & 0.1   & 0.1   & 0.1 \\
                   & \(\lambda_2\) & 0.1  & 0.005 & 0.1  & 0.005 & 0.005 & 0.1 \\
                   & \(\lambda_3\) & 0.01 & 2.0   & 0.01 & 0.5   & 0.5   & 0.01 \\
        \addlinespace[3pt]
        PAVE (SAC) & \(\lambda_1\) & 0.1  & 0.1   & 0.1    & 0.1    & 2.0    & 2.0 \\
                   & \(\lambda_2\) & 0.5  & 0.005 & 0.0005 & 0.0005 & 0.0005 & 0.005 \\
                   & \(\lambda_3\) & 0.05 & 0.5   & 1.0    & 1.0    & 3.0    & 2.0 \\
        \bottomrule
    \end{tabular}
    \par\vspace{4pt}
    \raggedright
    For PAVE, \(\lambda_1\), \(\lambda_2\), and \(\lambda_3\) weight
    mixed-partial regularization, vector-field consistency, and curvature
    preservation, respectively. Its perturbation scale \(\sigma=0.01\)
    and curvature floor \(\delta=1.0\) are fixed across all tasks
    and both backbones.
\end{table}
\FloatBarrier

\paragraph{L2C2.}
For both TD3 and SAC, we use \(\sigma=1.0\),
\(\underline{\lambda}=0.01\), \(\overline{\lambda}=1.0\), and
\(\beta=0.1\) on all six tasks.

\paragraph{\method{}.}
Table~\ref{tab:damper_eta_hyperparameters} lists the interpolation
parameter \(\eta\) used for each task--backbone pair in our main
experiments. The parameter sweeps used for ablation are reported separately.

\begin{table}[!htbp]
    \centering
    \caption{Interpolation parameter \(\eta\) for \method{}.}
    \label{tab:damper_eta_hyperparameters}
    \small
    \setlength{\tabcolsep}{7pt}
    \renewcommand{\arraystretch}{1.15}
    \begin{tabular}{@{}lcccccc@{}}
        \toprule
        Backbone & LunarLander & Pendulum & Reacher & Ant & Hopper & Walker \\
        \midrule
        TD3 & 0.8 & 1.0 & 1.0 & 0.0 & 1.0 & 1.0 \\
        SAC & 1.0 & 1.0 & 0.8 & 0.0 & 0.0 & 1.0 \\
        \bottomrule
    \end{tabular}
\end{table}
\FloatBarrier

\paragraph{Hardware.}
Experiments were conducted using NVIDIA GeForce RTX 2080 and
RTX 5060 Ti GPUs.
\begin{table}[!htbp]
    \centering
    \caption{Training budgets and network sizes following the PAVE
    protocol~\parencite{lee2026pave}.}
    \label{tab:training_configuration}
    \small
    \setlength{\tabcolsep}{12pt}
    \renewcommand{\arraystretch}{1.1}
    \begin{tabular}{lcc}
        \toprule
        Setting & TD3 & SAC \\
        \midrule
        \multicolumn{3}{l}{\textit{Training iterations}} \\
        LunarLander & 500,000   & 500,000   \\
        Pendulum    & 100,000   & 100,000   \\
        Reacher     & 500,000   & 500,000   \\
        Ant         & 1,000,000 & 1,000,000 \\
        Hopper      & 1,000,000 & 1,000,000 \\
        Walker2d    & 1,000,000 & 1,000,000 \\
        \midrule
        Hidden-layer widths & \((400,300)\) & \((256,256)\) \\
        \bottomrule
    \end{tabular}
\end{table}

\section{Native Actor Objectives}
\label{app:native_actor_objectives}

We specify the native actor losses and deterministic action maps used
in our TD3 and SAC implementations. Expectations over states use the
state marginal of the replay transition distribution \(\mathcal D\).
The critic parameters are held fixed during actor optimization, while
gradients pass through the critic's action input.

\paragraph{TD3.}
For TD3~\parencite{fujimoto2018addressing}, the deterministic actor
\(\mu_\theta\) is optimized using
\begin{equation}
    \returnloss^{\mathrm{TD3}}(\theta)
    =-\mathbb E_{s\sim\mathcal D}
    \left[Q_{\phi_1}(s,\mu_\theta(s))\right].
    \label{eq:td3_actor_loss}
\end{equation}
The temporal-consistency loss uses
\(\bar\mu_\theta(s)=\mu_\theta(s)\).

\paragraph{SAC.}
For SAC~\parencite{haarnoja2018soft}, let \(a_\theta(s,\epsilon)\)
denote a reparameterized action with
\(\epsilon\sim\mathcal N(0,I)\), and let \(\alpha_{\mathrm{ent}}\)
denote the entropy temperature. The native actor loss is
\begin{equation}
\begin{split}
    \returnloss^{\mathrm{SAC}}(\theta)
    =\mathbb E_{s\sim\mathcal D,\epsilon}
    \big[&\alpha_{\mathrm{ent}}
        \log\pi_\theta(a_\theta(s,\epsilon)\mid s)\\
        &-\min_{j\in\{1,2\}}
        Q_{\phi_j}(s,a_\theta(s,\epsilon))\big].
    \label{eq:sac_actor_loss}
\end{split}
\end{equation}
The temporal-consistency loss uses the squashed mean action
\(\pi_\theta(s)=a_\theta(s,0)\), including any action-bound
rescaling used by the policy.

\Needspace{28\baselineskip}
\section{Computational Cost}
\label{app:computational_cost}

We measure the per-iteration update time of TD3 and SAC and their
smoothness variants on an NVIDIA GeForce RTX 5060 Ti GPU using
Hopper-v5 transitions collected under random actions.
Each configuration starts from freshly initialized networks
 and uses a batch size of 256. We perform 50 warmup
iterations followed by 1,000 timed iterations. The measured operations
include forward computation, loss evaluation, gradient computation,
and optimizer steps; logging, diagnostics, and target-network copy
operations are excluded. During the timed interval, TD3 performs
500 actor updates and SAC performs 1,000. We therefore report time
per update iteration and normalize it to the unregularized baseline
within each backbone.

\begin{table}[!htbp]
    \centering
    \caption{Average update time on Hopper-v5. Relative time is the ratio
    to the corresponding unregularized backbone. Lower is better.
    Each entry averages 1,000 timed update iterations.}
    \label{tab:runtime_comparison}
    \small
    \begin{tabular}{lrrrr}
        \toprule
        & \multicolumn{2}{c}{TD3}
        & \multicolumn{2}{c}{SAC} \\
        \cmidrule(lr){2-3}\cmidrule(lr){4-5}
        Method & Time (ms) & Relative ($\times$)
               & Time (ms) & Relative ($\times$) \\
        \midrule
        Unregularized       & 4.54  & 1.00 & 10.87 & 1.00 \\
        CAPS                & 7.32  & 1.61 & 18.78 & 1.73 \\
        Grad-CAPS           & 6.35  & 1.40 & 18.60 & 1.71 \\
        ASAP                & 9.85  & 2.17 & 23.38 & 2.15 \\
        L2C2                & 12.71 & 2.80 & 48.36 & 4.45 \\
        PAVE                & 24.07 & 5.30 & 32.22 & 2.96 \\
        \method{}& 6.57  & 1.45 & 14.26 & 1.31 \\
        \bottomrule
    \end{tabular}
\end{table}

Table~\ref{tab:runtime_comparison} shows that \method{} with
$\eta=0$ requires 6.57\,ms per iteration with TD3 and 14.26\,ms
with SAC, corresponding to overheads of 44.8\% and 31.3\%,
respectively, over the unregularized backbones.
On TD3, \method{} is slightly slower than Grad-CAPS but faster
than CAPS, ASAP, L2C2, and PAVE. On SAC, it has the lowest
measured runtime among the compared smoothness methods.
These measurements characterize update computation under the
tested configuration and do not represent end-to-end training time.

\end{document}